\definecolor{our-blue}{rgb}{0.367, 0.504, 0.707}
\definecolor{our-green}{rgb}{0.56, 0.692, 0.195}
\definecolor{our-darkgreen}{rgb}{0.297, 0.348, 0.105}
\definecolor{our-yellow}{rgb}{0.881, 0.611, 0.142}
\definecolor{our-red}{rgb}{0.923, 0.386, 0.209}
\renewcommand\paragraph{\@startsection{paragraph}{4}{\z@}%
{6pt plus 2pt minus 2pt}{-1em}
{\normalfont\normalsize\bfseries}} 
\newcommand{\method}{RaMBO\xspace}
\newcommand{\pascal}{Pascal\xspace}
\newcommand{\vocvii}{VOC 07\xspace}
\newcommand{\vocxii}{VOC 12\xspace}
\newcommand{\vocboth}{VOC 07+12\xspace}
\newcommand\Tstrut{\rule{0pt}{2.5ex}}
\newcommand{\R}{\mathbb{R}}
\newcommand{\N}{\mathbb{N}}
\newcommand{\CC}{\mathcal C}
\newcommand{\DD}{\mathcal D}
\newcommand{\y}{\mathbf{y}}
\renewcommand{\d}{\mathrm{d}}
\let\bar\overline
\DeclareMathOperator{\rank}{\mathbf{rk}}
\DeclareMathOperator{\ap}{\mathit{AP}}
\DeclareMathOperator{\map}{\mathit{mAP}}
\DeclareMathOperator{\ndcg}{\mathit{NDCG}}
\DeclareMathOperator*{\argmin}{arg\,min}
\DeclareMathOperator{\rel}{\mathrm{rel}}
\DeclareMathOperator{\precision}{\mathrm{Prec}}
\DeclareMathOperator{\rak}{\mathit{r@K}}
\DeclareMathOperator{\rtak}{\mathit{\widetilde{r}@K}}
\DeclareMathOperator{\Rak}{\mathit{R@K}}
\DeclareMathOperator{\lossk}{\mathit{L@K}}
\DeclareMathOperator{\lrec}{\mathit{L}_{\text{rec}}}
\DeclareMathOperator{\lap}{\mathit{L}_{\ap}}
\DeclareMathOperator{\lmap}{\mathit{L}_{\map}}
\DeclareMathOperator{\lapc}{\mathit{L}_{\ap\CC}}
\DeclareMathOperator*{\E}{\mathbb{E}}
\newcommand{\fig}[1]{Fig.~\ref{#1}}  
\newcommand{\Tab}[1]{Table~\ref{#1}} 
\newcommand{\tab}[1]{Tab.~\ref{#1}}
\newcommand{\eqn}[1]{Eq.~\ref{#1}} 
\newcommand{\eqnp}[1]{(Eq.~\ref{#1})} 
\renewcommand{\sec}[1]{Sec.~\ref{#1}} 
\newcounter{problem}
\newcommand{\newproblem}[1]{\refstepcounter{problem}\label{#1}}
\theoremstyle{plain}
\newtheorem{theorem}{Theorem}
\newtheorem{lemma}{Lemma}
\newtheorem{prop}{Proposition}
\algrenewcommand{\algorithmiccomment}[1]{\bgroup\hfill//~#1\egroup}
\algrenewcommand{\Return}{\State\textbf{return}\ }
\algnewcommand{\Save}{\State\textbf{save}\ }
\algnewcommand{\Load}{\State\textbf{load}\ }
\algnewcommand{\Define}{\State\textbf{define}\ }
\ifcvprfinal\pagestyle{empty}\fi
\begin{document}

\title{Optimizing Rank-based Metrics with Blackbox Differentiation}

\author{%
	Michal Rol\'\i nek$^1$%
	\thanks{These authors contributed equally.}\ \ ,
	V\'\i t Musil$^{2\,*}$,
	Anselm Paulus$^1$,
	Marin Vlastelica$^1$,
	Claudio Michaelis$^3$,
	Georg Martius$^1$
	\smallskip
	\and
	\normalsize
	$^1$ Max-Planck-Institute for Intelligent Systems,\\
	\normalsize
	T\"ubingen, Germany
	\and
	\normalsize
	$^2$ Universit\`a degli Studi di Firenze,\\
	\normalsize
	Italy
	\and
	\normalsize
	$^3$ University of T\"ubingen,\\
	\normalsize
	Germany
	\and
	\texttt{\small michal.rolinek@tuebingen.mpg.de}
}

\maketitle
\thispagestyle{empty}

\begin{figure*}[b]
  \centering
  \includegraphics[width=0.85\linewidth]{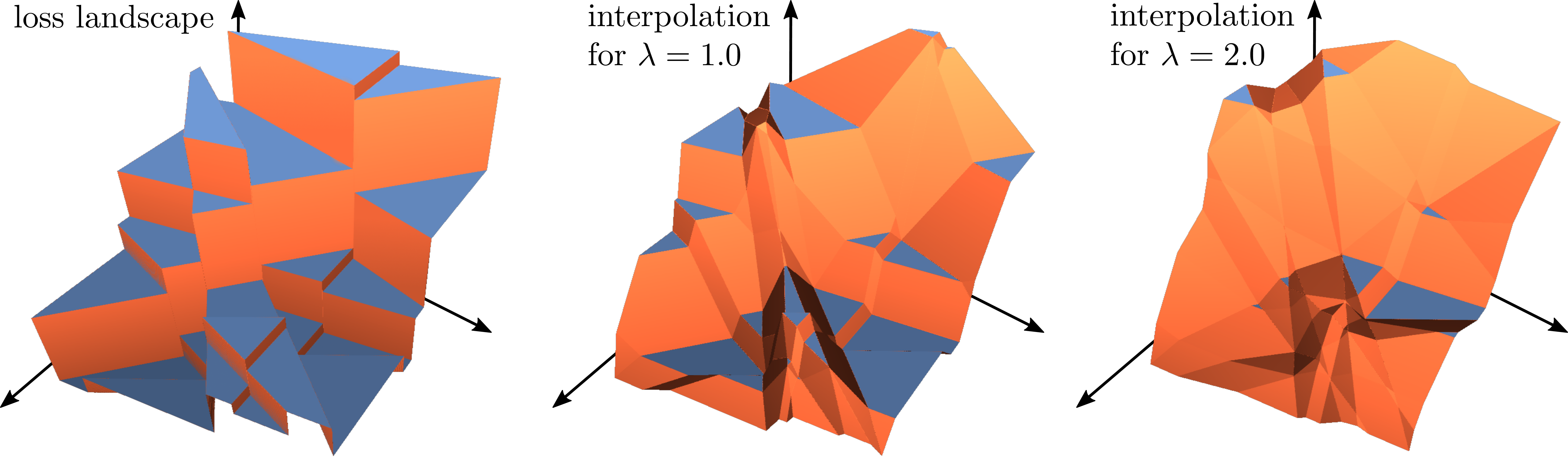}
	\caption{Differentiation of a piecewise constant rank-based loss.  A
	two-dimensional section of the loss landscape is shown (left) along with two
	efficiently differentiable interpolations of increasing strengths (middle and
	right).
  }
  \label{fig:f-lambda-2d}
\end{figure*}

\begin{abstract}
\noindent
Rank-based metrics are some of the most widely used criteria for performance
evaluation of computer vision models. Despite years of effort, direct
optimization for these metrics remains a challenge due to their
non-differentiable and non-decomposable nature. We present an efficient,
theoretically sound, and general method for differentiating rank-based metrics
with mini-batch gradient descent. In addition, we address optimization
instability and sparsity of the supervision signal that both arise from using
rank-based metrics as optimization targets. Resulting losses based on recall
and Average Precision are applied to image retrieval and object detection
tasks. We obtain performance that is competitive with state-of-the-art on
standard image retrieval datasets and consistently improve performance of near
state-of-the-art object detectors.
\end{abstract}

\section{Introduction}

\noindent
Rank-based metrics are frequently used to evaluate performance on a wide
variety of computer vision tasks.  For example, in the case of image retrieval,
these metrics are required since, at test-time, the models produce a ranking of
images based on their relevance to a query.  Rank-based metrics are also
popular in classification tasks with unbalanced class distributions or multiple
classes per image. One prominent example is object detection, where an average
over multiple rank-based metrics is used for final evaluation.  The most common
metrics are recall \cite{ge2018deep}, Average Precision ($\ap$)
\cite{yue2007support}, Normalized Discounted Cumulative Gain ($\ndcg$)
\cite{chakrabarti2008structured}, and the Spearman Coefficient
\cite{cohendet2018media}.

Directly optimizing for the rank-based metrics is inviting but also notoriously
difficult due to the \textit{non-differentiable} (piecewise constant) and
\textit{non-decomposable} nature of such metrics. A trivial solution is to use
one of several popular surrogate functions such as 0-1 loss \citep{Lin2002},
the area under the ROC curve \citep{Bartell:1994} or cross entropy.  Many
studies from the last two decades have addressed direct optimization with
approaches ranging from histogram binning approximations
\cite{cakir2019deep,revaud2019learning, he2018hashing}, finite difference
estimation \cite{henderson2016end}, loss-augmented inference
\cite{yue2007support, mohapatra2018efficient}, gradient approximation
\cite{song2016training} all the way to using a large LSTM to fit the ranking
operation \cite{engilberge2019sodeep}.

Despite the clear progress in direct optimization
\citep{mohapatra2018efficient, cakir2019deep, chen2019towards}, these methods
are notably omitted in the most publicly used implementation hubs for object
detection \citep{mmdetection, detectron2, mrcnn-benchmark,
tf-object-detection-api}, and image retrieval \citep{metriclearning-repo}. The
reasons include poor scaling with sequence lengths, lack of publicly available
implementations that are efficient on modern hardware, and fragility of the
optimization itself.

In a clean formulation, backpropagation through rank-based losses reduces to
providing a meaningful gradient of the piecewise constant ranking function.
This is an interpolation problem, rather than a gradient estimation problem
(the true gradient is simply zero almost everywhere). Accordingly, the
properties of the resulting interpolation (whose gradients are returned) should
be of central focus, rather than the gradient itself.

In this work, we interpolate the ranking function via \emph{blackbox
backpropagation} \cite{blackbox-diff}, a framework recently proposed in the
context of combinatorial solvers. This framework is the first one to give
mathematical guarantees on an interpolation scheme. It applies to piecewise
constant functions that originate from minimizing a discrete objective
function. To use this framework, we reduce the ranking function to a
combinatorial optimization problem. In effect, we inherit two important
features of \cite{blackbox-diff}: \textbf{mathematical guarantees} and the
ability to compute the gradient only with the use of a non-differentiable
\textbf{blackbox implementation} of the ranking function. This allows using
implementations of ranking functions that are already present in popular
machine learning frameworks which results in straightforward implementation and
significant practical speed-up. Finally, differentiating directly the ranking
function gives additional flexibility for designing loss functions.

Having a conceptually pure solution for the differentiation, we can then focus
on another key aspect: sound loss design. To avoid ad-hoc modifications, we
take a deeper look at the caveats of direct optimization for rank-based
metrics. We offer multiple approaches for addressing these caveats, most
notably we introduce  \textbf{margin-based versions} of rank-based losses and
mathematically derive a \textbf{recall-based loss function} that provides dense
supervision.

Experimental evaluation is carried out on image retrieval tasks where we
optimize the recall-based loss and on object detection where we directly
optimize mean Average Precision. On the retrieval experiments, we achieve
performance that is on-par with state-of-the-art while using a simpler setup.
On the detection tasks, we show consistent improvement over highly-optimized
implementations that use the cross-entropy loss, while our loss is used in an
out-of-the-box fashion. We release the code used for our
experiments\footnote{\url{https://github.com/martius-lab/blackbox-backprop}.}.

\section{Related work}

\paragraph{Optimizing for rank-based metrics} As rank-based evaluation metrics
are now central to multiple research areas, their direct optimization has
become of great interest to the community.  Traditional approaches typically
rely on different flavors of loss-augmented inference
\cite{mohapatra2018efficient, yue2007support, mohapatra2014efficient,
mcfee2010metric},  or gradient approximation \cite{song2016training,
henderson2016end}. These approaches often require solving a combinatorial
problem as a subroutine where the nature of the problem is dependent on the
particular rank-based metric. Consequently, efficient algorithms for these
subproblems were proposed \cite{mohapatra2018efficient, song2016training,
yue2007support}.

More recently, differentiable histogram-binning approximations
\cite{cakir2019deep, he2018hashing, he2018local, revaud2019learning} have
gained popularity as they offer a more flexible framework.  Completely
different techniques including learning a distribution over rankings
\cite{taylor2008softrank}, using a policy-gradient update rule
\cite{Rao2018LearningGO}, learning the sorting operation entirely with a deep
LSTM \cite{engilberge2019sodeep} or perceptron-like error-driven updates have
also been applied \cite{chen2019towards}.

\paragraph{Metric learning} There is a great body of work on metric learning
for retrieval tasks, where defining a suitable loss function plays an essential
role. \citet{bellet2013survey} and \citet{kulis2013metric} provide a broader
survey of metric learning techniques and applications.  Approaches with local
losses range from employing pair losses \cite{bromley1994signature,
koch2015siamese}, triplet losses
\cite{schroff2015facenet,hoffer2015deep,song2016training} to quadruplet losses
\cite{law2013quadruplet}.  While the majority of these works focus on local,
decomposable losses as above, multiple lines of work exist for directly
optimizing global rank-based losses \cite{engilberge2019sodeep,
taylor2008softrank, revaud2019learning}.  The importance of good batch sampling
strategies is also well-known, and is the subject of multiple studies
\cite{oh2016deep, schroff2015facenet, ge2018deep, wu2017iccv}, while others
focus on generating novel training examples
\cite{zhao2018adversarial,song2016training,movshovitz-attias2017iccv}.

\paragraph{Object detection} Modern object detectors use a combination of
different losses during training \cite{girshick2014rich, ren2015faster,
liu2015ssd, Redmon_2016_CVPR, he2017mask, lin2018focal}. While the biggest
performance gains have originated from improved architectures
\cite{ren2015faster, he2017mask, Redmon2018YOLOv3, ghiasi2019fpn} and feature
extractors \cite{he2016deep, zoph2018learning}, some works focused on
formulating better loss functions \cite{lin2018focal,
rezatofighi2019generalized, Goldman_2019_CVPR}. Since its introduction in the
Pascal \texttt{VOC} object detection challenge \cite{everingham2010pascal}
\emph{mean Average Precision} ($\map$) has become the main evaluation metric
for detection benchmarks. Using the metric as a replacement for other less
suitable objective functions has thus been studied in several works
\cite{song2016training, henderson2016end, Rao2018LearningGO, chen2019towards}.

\section{Background}
\subsection{Rank-based metrics}

\noindent
For a positive integer $n$, we denote by $\Pi_n$ the set of all permutations of
$\{1,\ldots,n\}$. The rank of vector $\y=[y_1,\dots,y_n]\in\R^n$, denoted by
$\rank(\y)$, is a permutation $\pi\in\Pi_n$ satisfying
\begin{equation} \label{eq:rank-def}
	y_{\pi^{-1}(1)} \ge y_{\pi^{-1}(2)} \ge \cdots \ge y_{\pi^{-1}(n)},
\end{equation}
\ie sorting $\y$.  Note, that rank is not defined uniquely for those vectors
for which any two components coincide. In the formal presentation, we reduce
our attention to \emph{proper rankings} in which ties do not occur.

The rank $\rank$ of the $i$-th element is one plus the number of members in the
sequence exceeding its value, \ie
\begin{equation}\label{eq:rank-def2}
	\rank(\y)_i = 1 + |\{j: y_j > y_i\}|.
\end{equation}

\subsubsection{Average Precision}

\noindent
For a fixed query, let $\y\in\R^n$ be a vector of relevance scores of $n$
examples. We denote by $\y^*\in\{0,1\}^n$ the vector of their ground truth
labels (relevant/irrelevant) and by
\begin{equation} \label{eq:rel-def}
	\rel(\y^*) = \{i:y^*_i = 1\}
\end{equation}
the set of indices of the relevant examples.  Then Average Precision is given
by
\begin{equation} \label{eq:AP-def}
	\ap(\y, \y^*)
		=	\frac{1}{|\rel(\y^*)|} \sum_{i \in\rel(\y^*)} \precision(i),
\end{equation}
where precision at $i$ is
defined as
\begin{equation}
	\precision(i)
		= \frac{|\{j\in\rel(\y^*): y_j \ge y_i\}|}{\rank(\y)_i}
\end{equation}
and describes the ratio of relevant examples among the $i$ highest-scoring
examples.

In classification tasks, the dataset typically consists of annotated images.
This we formalize as pairs $(x_i,\y^*_i)$ where $x_i$ is an input image and
$\y^*_i$ is a binary class vector, where, for every $i$, each
$(\y^*_i)_c\in\{0,1\}$ denotes whether an image $x_i$ belongs to the class
$c\in\CC$.  Then, for each example $x_i$ the model provides a vector of
suggested class-relevance scores $\y_i = \phi(x_i, \theta)$, where $\theta$ are
the parameters of the model.

To evaluate mean Average Precision ($\map$), we consider for each class
$c\in\CC$ the vector of scores $\y(c)=[(\y_i)_c]_{i}$ and labels
$\y^*(c)=[(\y^*_i)_c]_i$.  We then take the mean of Average Precisions over all
the classes
\begin{equation} \label{eq:mAP-def}
	\map
		= \frac{1}{|\CC|} \sum_{c \in \CC}
			\ap\bigl(\y(c), \y^*(c)\bigr).
\end{equation}

Note that $\map \in [0, 1]$ and that the highest score 1 corresponds to perfect
score prediction in which all relevant examples precede all irrelevant
examples.

\subsubsection{Recall}

\noindent
Recall is a metric that is often used for information retrieval.  Let again
$\y\in\R^n$ and $\y^*\in\{0,1\}^n$ be the scores and the ground-truth labels
for a given query over a dataset.  For a positive integer $K$, we set
\begin{equation} \label{eq:rec-k}
	\rak(\y,\y^*) =
		\begin{cases}
			1 & \text{if $\exists i \in\rel(\y^*)$ with $\rank(\y)_i \le K$}\\
			0 &\text{otherwise,}
		\end{cases}
\end{equation}
where $\rel(\y^*)$ is given in \eqn{eq:rel-def}.

In a setup where each element $x_i$ of the dataset $\DD$ is a possible query,
we define the ground truth matrix as follows. We set $\y^*_i(j)=1$ if $x_j$
belongs to the same class as the query $x_i$, and zero otherwise.  The scores
suggested by the model are again denoted by $\y_i = [\phi(x_i,x_j,\theta)\colon
j \in \DD]$.

In order to evaluate the model over the whole dataset $\DD$, we average $\rak$
over all the queries $x_i$, namely
\begin{equation}
	\Rak
		= \frac{1}{|\DD|} \sum_{i\in\DD}
			\rak\bigl(\y_i,\y^*_i\bigr).
\end{equation}

Again, $\Rak\in[0,1]$ for every $K$.  The highest score 1 means that a relevant
example is always found among the top $K$ predictions.

\subsection{Blackbox differentiation of combinatorial solvers}

\newcommand{\usedtobey}{\mathbf{s}}
\newcommand{\usedtobeY}{{S}}

\noindent
In order to differentiate the ranking function, we employ a method for
efficient backpropagation through combinatorial solvers -- recently proposed in
\citep{blackbox-diff}.  It turns algorithms or solvers for problems like
\textsc{shortest-path}, \textsc{traveling-salesman-problem}, and various graph
cuts into differentiable building blocks of neural network architectures.

With minor simplifications, such solvers (\eg for the \textsc{Multicut}
problem) can be formalized as maps that take continuous input $\y\in \R^n$ (\eg
edge weights of a fixed graph) and return discrete output $\usedtobey
\in\usedtobeY\subset\R^n$ (\eg indicator vector of a subset of edges forming a
cut) such that it minimizes a combinatorial objective expressed as an inner
product $\y\cdot\usedtobey$ (\eg the cost of the cut). Note that the notation
differs from \citep{blackbox-diff} ($\y$ was $w$ and $\usedtobey$ was $y$).  In
short, a blackbox solver is
\begin{equation} \label{E:solver}
	\y \mapsto \usedtobey(\y)
		\quad \text{such that} \quad
	\usedtobey(\y)
		= \argmin_{\usedtobey\in\usedtobeY} \y\cdot\usedtobey,
\end{equation}
where $\usedtobeY$ is the discrete set of admissible assignments (\eg subsets
of edges forming cuts).

The key technical challenge when computing the backward pass is meaningful
differentiation of the piecewise constant function $\y \to L(\usedtobey(\y))$
where $L$ is the final loss of the network. To that end, \citep{blackbox-diff}
constructs a family of continuous and (almost everywhere) differentiable
functions parametrized by a single hyperparameter $\lambda>0$ that controls the
trade-off between ``faithfulness to original function'' and ``informativeness
of the gradient'', see \fig{fig:f-lambda-2d}.  For a fixed $\lambda$ the
gradient of such an interpolation at point $\usedtobey$ is computed and passed
further down the network (instead of the true zero gradient) as
\begin{align}
  \frac{\partial L(\usedtobey(\y))}{\partial \y}
		:= - \frac1\lambda (\usedtobey-\usedtobey_{\lambda})
\end{align}
where $\usedtobey_{\lambda}$ is the output of the solver for a certain
precisely constructed modification of the input. The modification is where the
incoming gradient information $\partial L(\usedtobey(\y))/{\partial
\usedtobey(\y)}$ is used.
For full details including the mathematical guarantees on the tightness of the
interpolation, see \citep{blackbox-diff}.

The main advantage of this method is that only a blackbox implementation of the
solver (\ie of the forward pass) is required to compute the backward pass.
This implies that powerful optimized solvers can be used instead of relying on
suboptimal differentiable relaxations.

\section{Method}

\subsection{Blackbox differentiation for ranking}

\noindent
In order to apply blackbox differentiation method for ranking, we need to find
a suitable combinatorial objective. Let $\y\in\R^n$ be a vector of $n$ real
numbers (the scores) and let $\rank\in\Pi_n$ be their ranks. The connection
between blackbox solver and ranking is captured in the following proposition.

\begin{prop}\label{prop:rank}
In the notation set by Eqs.~\eqref{eq:rank-def} and \eqref{eq:rank-def2}, we
have  
\begin{equation} \label{eq:rank-solver-def}
	\rank(\y) = \argmin_{\pi\in\Pi_n} \y\cdot\pi.
\end{equation}
\end{prop}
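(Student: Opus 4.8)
The plan is to recognize \eqref{eq:rank-solver-def} as an instance of the classical rearrangement inequality. Writing the objective explicitly as $\y\cdot\pi=\sum_{i=1}^{n} y_i\,\pi_i$, where $\pi_i$ is the rank assigned to coordinate $i$, we are minimizing a sum of products in which the multipliers $\pi_1,\dots,\pi_n$ are constrained to be a permutation of $\{1,\dots,n\}$. The intuition is that the sum is smallest exactly when the largest scores are paired with the smallest ranks, which is precisely the assignment made by $\rank(\y)$.

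First I would establish the optimality condition by an exchange argument. Let $\pi$ be any minimizer of $\y\cdot\pi$ over $\Pi_n$, and suppose for contradiction that there exist coordinates $i,j$ with $y_i>y_j$ but $\pi_i>\pi_j$. Swapping the two assigned ranks yields a permutation $\pi'$ with $\pi'_i=\pi_j$, $\pi'_j=\pi_i$, and $\pi'_k=\pi_k$ for all other $k$, and the change in the objective is
\begin{equation*}
	\y\cdot\pi'-\y\cdot\pi=(y_i-y_j)(\pi_j-\pi_i),
\end{equation*}
which is strictly negative since $y_i-y_j>0$ and $\pi_j-\pi_i<0$. This contradicts minimality, so any minimizer must satisfy the monotonicity property $y_i>y_j\implies\pi_i<\pi_j$.

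It then remains to verify that this monotonicity condition singles out exactly the permutation of \eqref{eq:rank-def2}. Because we restrict to proper rankings, the components of $\y$ are pairwise distinct, so the implication applies to every ordered pair and forces $|\{j:\pi_j<\pi_i\}|=|\{j:y_j>y_i\}|$, whence $\pi_i=1+|\{j:y_j>y_i\}|=\rank(\y)_i$. Conversely, $\rank(\y)$ plainly satisfies the monotonicity property and therefore attains the minimum; combining the two directions gives \eqref{eq:rank-solver-def}. I do not expect a genuine obstacle here, as the argument is essentially the rearrangement inequality: the only delicate point is the treatment of ties, where the minimizer need not be unique and the correspondence with a single permutation degenerates. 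This is ruled out by the standing proper-ranking assumption, which makes the exchange inequality strict and simultaneously guarantees uniqueness of the optimizer.
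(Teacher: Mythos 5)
Your proof is correct, but it takes a somewhat different route from the paper's. The paper treats the rearrangement inequality (Theorem~\ref{T:rearr-ineq}, cited from \citep[Theorem~368]{hardy1952inequalities}) as a black box: its proof takes a minimizer $\pi$, rewrites $\y\cdot\pi$ via the inverse permutation as $y_{\pi^{-1}(1)}\cdot 1+\cdots+y_{\pi^{-1}(n)}\cdot n$, and concludes from the inequality that minimality forces the sorting condition \eqref{eq:rank-def}, thereby matching $\pi$ against the \emph{sorting} definition of rank. You instead inline the standard proof of that inequality as a pairwise exchange argument --- the computation $\y\cdot\pi'-\y\cdot\pi=(y_i-y_j)(\pi_j-\pi_i)<0$ is exactly the swap step underlying Theorem~\ref{T:rearr-ineq} --- and then match the resulting monotonicity property directly against the \emph{counting} definition \eqref{eq:rank-def2}, avoiding the inverse-permutation rewriting altogether. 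What each buys: the paper's version is shorter and delegates to a classical result, at the cost of the small notational trick of passing to $\pi^{-1}$; yours is self-contained, makes the strictness of the exchange (hence uniqueness of the minimizer under the proper-ranking assumption) explicit rather than implicit in the equality case of the cited theorem, and verifies the formula $\rank(\y)_i=1+|\{j:y_j>y_i\}|$ that the algorithm actually uses. One phrasing quibble: ``$\rank(\y)$ satisfies monotonicity and therefore attains the minimum'' is not literally a one-step implication; the correct chain is that a minimizer exists by finiteness of $\Pi_n$, every minimizer is monotone by your exchange argument, and distinctness makes the monotone permutation unique, so the minimizer must be $\rank(\y)$ --- your final sentence assembles exactly these pieces, so the argument stands.
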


In other words, the mapping $\y\to\rank(\y)$ is a minimizer of a linear
combinatorial objective just as \eqn{E:solver} requires.

The proof of Proposition \ref{prop:rank} rests upon a classical rearrangement
inequality \citep[Theorem~368]{hardy1952inequalities}. The following theorem is
its weaker formulation that is sufficient for our purpose.

\begin{theorem}[Rearrangement inequality] \label{T:rearr-ineq}
For every positive integer $n$, every choice of real numbers $y_1 \ge \cdots
\ge y_n$ and every permutation $\pi\in\Pi_n$ it is true that
\begin{equation*}
		y_1 \cdot 1 + \cdots + y_n \cdot n
			\le y_1 \pi(1) + \cdots + y_n \pi(n).
\end{equation*}
Moreover, if $y_1,\dots,y_n$ are distinct, equality occurs precisely for the
identity permutation $\pi$.
\end{theorem}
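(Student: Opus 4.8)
The plan is to prove the inequality by an exchange argument in the spirit of bubble sort, thereby identifying the identity permutation as the (unique, in the distinct case) minimizer of $\sum_i y_i\pi(i)$ over $\pi\in\Pi_n$.

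First I would dispose of the trivial case $\pi=\mathrm{id}$, in which both sides of the claimed inequality coincide. For a general $\pi$, the heart of the matter is a single-swap estimate: if $\pi$ contains an \emph{adjacent inversion}, i.e.\ some index $k$ with $\pi(k)>\pi(k+1)$, let $\pi'$ be obtained from $\pi$ by exchanging the two values at positions $k$ and $k+1$. Only the $k$-th and $(k+1)$-th summands change, and a short computation gives
\begin{equation*}
  \sum_i y_i\pi'(i) - \sum_i y_i\pi(i)
    = (y_k - y_{k+1})\bigl(\pi(k+1) - \pi(k)\bigr).
\end{equation*}
Since $k<k+1$ forces $y_k\ge y_{k+1}$ while the inversion gives $\pi(k+1)<\pi(k)$, the right-hand side is $\le 0$; removing an adjacent inversion never increases the objective.

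Next I would iterate this swap. A permutation has no adjacent inversion precisely when $\pi(1)<\cdots<\pi(n)$, i.e.\ precisely when $\pi=\mathrm{id}$, so any non-identity $\pi$ admits such a swap. Each adjacent-inversion swap lowers the inversion count by exactly one, so the process terminates after finitely many steps at the identity, with the objective only having decreased along the way. This yields $\sum_i y_i\cdot i\le\sum_i y_i\pi(i)$, which is the stated inequality. I expect the termination-and-convergence bookkeeping to be the only point needing care; it is settled by observing that the inversion count is a nonnegative integer that strictly drops with each swap and vanishes only for the identity.

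Finally, for the equality clause I would revisit the single-swap estimate under the hypothesis that the $y_i$ are distinct. Then $y_k-y_{k+1}>0$ strictly, so every adjacent-inversion swap strictly decreases the objective. Hence for any $\pi\ne\mathrm{id}$ the full chain of swaps produces a strict decrease, giving $\sum_i y_i\pi(i)>\sum_i y_i\cdot i$, so equality holds exactly for the identity permutation. The per-swap computation is elementary; the substance of the argument lies entirely in organizing the iterated reduction and tracking when strictness is forced.
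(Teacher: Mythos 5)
Your proof is correct, but note that the paper does not actually prove this theorem at all: it states it as a weaker formulation of a classical result and defers entirely to the citation \citep[Theorem~368]{hardy1952inequalities}, reserving its own proof effort for Proposition~\ref{prop:rank}, which applies the theorem. Your adjacent-transposition (bubble-sort) exchange argument is the standard elementary proof and it is carried out soundly: the single-swap identity $\sum_i y_i\pi'(i)-\sum_i y_i\pi(i)=(y_k-y_{k+1})\bigl(\pi(k+1)-\pi(k)\bigr)$ is right, the observation that a permutation of $\{1,\dots,n\}$ with no adjacent inversion is strictly increasing and hence the identity is right, and your termination bookkeeping via the inversion count (a nonnegative integer that drops by exactly one per swap, since an adjacent swap changes the relative order of no pair other than the swapped one) is exactly what is needed. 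The equality clause is also handled correctly: under distinctness each factor $y_k-y_{k+1}$ is strictly positive, so every swap in the chain strictly decreases the objective, giving strict inequality for every $\pi\ne\mathrm{id}$, while the identity trivially attains equality. What your route buys over the paper's is self-containedness --- a reader gets a complete two-line-per-step verification rather than a pointer to Hardy--Littlewood--P\'olya; what the citation buys the paper is brevity and the authority of the sharper classical statement, of which the version here is an explicitly weakened special case.
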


\begin{proof}[Proof of Proposition~\ref{prop:rank}]
Let $\pi$ be the permutation that minimizes \eqref{eq:rank-solver-def}. This
means that the value of the sum
\begin{equation} \label{eq:ypi-minimal}
	y_1\pi(1) + \cdots + y_n\pi(n)
\end{equation}
is the lowest possible. Using the inverse permutation $\pi^{-1}$
\eqref{eq:ypi-minimal} rewrites as
\begin{equation} \label{eq:ypi-minimal-2}
	y_{\pi^{-1}(1)}\cdot 1 + \cdots + y_{\pi^{-1}(n)}\cdot n
\end{equation}
and therefore, being minimal in \eqref{eq:ypi-minimal-2} makes
\eqref{eq:rank-def} hold due to Theorem \ref{T:rearr-ineq}.  This shows that
$\pi=\rank(y)$.
\end{proof}

The resulting gradient computation is provided in Algorithm \ref{algo:main} and
only takes a few lines of code.  We call the method \emph{Ranking Metric
Blackbox Optimization} (\method{}).

Note again the presence of a blackbox ranking operation. In practical
implementation, we can delegate this to a built-in function of the employed
framework (\eg{} \textsc{torch.argsort}). Consequently, we inherit the $O(n
\log n)$ computational complexity as well as a fast vectorized implementation
on a GPU. To our knowledge, the resulting algorithm is the first to have both
truly sub-quadratic complexity (for both forward and backward pass) \emph{and}
to operate with a general ranking function as can be seen in
\tab{table:computational_complexity} (not however that
\cite{mohapatra2018efficient} have a lower complexity as they specialize on AP
and not general ranking).

\begin{table}[t]
	\centering
	\renewcommand{\arraystretch}{1.15}
	\small
	\begin{tabular*}{\linewidth}{@{\extracolsep\fill}lcc@{\hskip2ex}}
		\hline\Tstrut
	    Method & forward + backward &  general ranking\\
		\hline\Tstrut
	    \textbf{\method} & $O(n \log n)$  & \checkmark \\
	    Mohapatra et al. \cite{mohapatra2018efficient} & $O(n \log p)$  & x \\
	    Chen et al. \cite{chen2019towards} & $O(np)$  & \checkmark \\
	    Yue et al. \cite{yue2007support} & $O(n^2)$ &  x \\
	    FastAP \cite{cakir2019deep} & $O\bigl((n+p)L\bigr)$ & \checkmark \\
	    SoDeep \cite{engilberge2019sodeep} & $O\bigl((n+p)h^2\bigr)$ &  \checkmark \\
		\hline
	\end{tabular*}
	\caption{Computational complexity of different approaches for differentiable
		ranking. The numbers of the negative and of the positive examples are denoted
		by $n$ and $p$, respectively.  For SoDeep, $h$ denotes the LSTM's hidden
		state size ($h \approx n$) and for FastAP $L$ denotes the number of bins.
		\method{} is the first method to directly differentiate general ranking with
		a truly sub-quadratic complexity.}
	\label{table:computational_complexity}
\end{table}

\algrenewcommand{\algorithmiccomment}[1]{\bgroup\hskip2em\textcolor{ourspecialtextcolor}{//~\textsl{#1}}\egroup}
\begin{algorithm}\vspace{.5ex}
\begin{algorithmic}
        \Define \textbf{Ranker} \textbf{as} blackbox operation computing ranks
\end{algorithmic}
\begin{algorithmic}
        \Function{ForwardPass}{$\y$}
        \State $\rank(\y) :=$ \textbf{Ranker}($\y$)
        \Save $ \y$ and $\rank(\y)$ for backward pass
        \Return $\rank(\y)$
        \EndFunction

        \Function{BackwardPass}{$\frac{\d L}{\d\rank}$}
        \Load $\y$ and $\rank(\y)$ from forward pass
        \Load hyperparameter $\lambda$
        \State $\y_\lambda := \y + \lambda \cdot \tfrac{\d L}{\d\rank}$
        \State $\rank(\y_\lambda) :=$ \textbf{Ranker}($\y_\lambda$)
        \Return $-\frac{1}{\lambda}\bigl[ \rank(\y) - \rank(\y_\lambda)\bigr]$
        \EndFunction
\end{algorithmic}
\caption{\Tstrut\method: Blackbox differentiation for ranking}
\label{algo:main}
\end{algorithm}

\subsection{Caveats for sound loss design}

\noindent
Is resolving the non-differentiability all that is needed for direct
optimization?  Unfortunately not. To obtain well-behaved loss functions, some
delicate considerations need to be made. Below we list a few problems
\ref{prob:batchsize}--\ref{prob:sparse} that arise from direct optimization
without further adjustments.

\newproblem{prob:batchsize}
\paragraph{\ref{prob:batchsize}} Evaluation of rank-based metrics is typically
carried out over the whole test set while direct optimization methods rely on
mini-batch approximations. This, however, \textbf{does not yield an unbiased
gradient estimate}. Particularly small mini-batch sizes result in optimizing a
very poor approximation of $\map$, see \fig{fig:map:decomp}.

\newproblem{prob:margin}
\paragraph{\ref{prob:margin}} Rank-based metrics are \textbf{brittle when many
ties happen in the ranking}. As an example, note that any rank-based metric
attains \emph{all its values} in the neighborhood of a dataset-wide tie.
Additionally, once a positive example is rated higher than all negative
examples even by the slightest difference, the metric gives no incentive for
increasing the difference. This induces a high sensitivity to potential shifts
in the statistics when switching to the test set. The need to pay special
attention to ties was also noted in \citep{cakir2019deep, he2018hashing}.

\newproblem{prob:sparse}
\paragraph{\ref{prob:sparse}} Some metrics give only \textbf{sparse
supervision}. For example, the value of $\rak$ only improves if the
\textit{highest-ranked} positive example moves up the ranking, while the other
positives have no incentive to do so. Similarly, Average Precision does not
give the incentive to decrease the possibly high scores of negative examples,
unless also some positive examples are present in the mini-batch. Since
positive examples are typically rare, this can be problematic.

\begin{figure}
    \centering
    \includegraphics[width=0.8\linewidth]{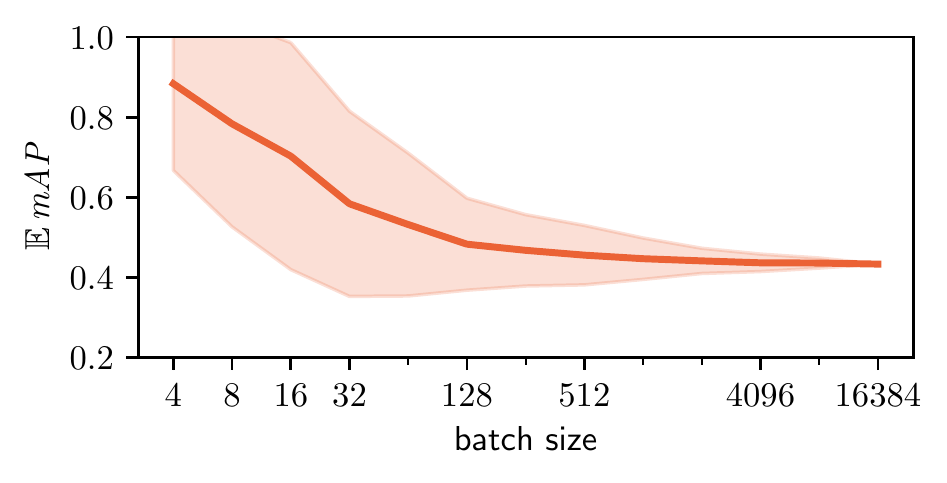}
		\caption{Mini-batch estimation of \emph{mean Average Precision}. The
			expected $\map$ (\ie the optimized loss) is an overly optimistic estimator
			of the true $\map$ over the dataset; particularly for small batch sizes.
			The mean and standard deviations over sampled mini-batch estimates are
			displayed.}
    \label{fig:map:decomp}
\end{figure}

\subsection{Score memory}\label{sec:memory}

\noindent
In order to mitigate the negative impact of small batch sizes on approximating
the dataset-wide loss \ref{prob:batchsize} we introduce a simple running
memory. It stores the scores for elements of the last $\tau$ previous batches,
thereby reducing the bias of the estimate.  All entries are concatenated for
loss evaluation, but the gradients only flow through the current batch.  This
is a simpler variant of ``batch-extension'' mechanisms introduced in
\citep{cakir2019deep, revaud2019learning}. Since only the scores are stored,
and not network parameters or the computational graph, this procedure has a
minimal GPU memory footprint.

\subsection{Score margin}\label{sec:margin}

\noindent
Our remedy for brittleness around ties \ref{prob:margin} is inspired by the
triplet loss \citep{schroff2015facenet}; we introduce a shift in the scores
during training in order to induce a \emph{margin}.  In particular, we add a
negative shift to the positively labeled scores and positive shift the
negatively labeled scores as illustrated in \fig{fig:margin}.  This also
implicitly removes the destabilizing scale-invariance.
Using notation as before, we modify the scores as
\begin{equation} \label{eq:margin}
	\overleftrightarrow{\y}_i =
	\begin{cases}
		y_i + \frac\alpha2
			& \text{if $y^*_i = 0$}
			\\
		y_i - \frac\alpha2
			& \text{if $y^*_i = 1$}
	\end{cases}
\end{equation}
where $\alpha$ is the prescribed margin. In the implementation, we replace the
ranking operation with $\rank_\alpha$ given by
\begin{equation} \label{eq:ranker-margin}
	\rank_\alpha(\y) = \rank\left(\overleftrightarrow{\y}\right).
\end{equation}

\begin{figure}
  \centering
  \includegraphics[width=0.95\linewidth]{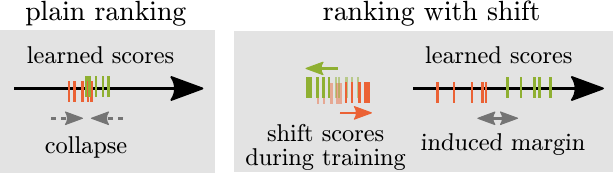}
	\caption{Naive rank-based losses can collapse during optimization. Shifting
		the scores during training induces a margin and a suitable scale for the
		scores. Red lines indicate negative scores and green positive scores.}
  \label{fig:margin}
\end{figure}

\subsection{Recall loss design}

\noindent
Let $\y$ be scores and $\y^*$ the truth labels, as usual.  As noted in
\ref{prob:sparse} the value of $\rak$ only depends on the highest scoring
relevant element.  We overcome the sparsity of the supervision by introducing a
refined metric
\begin{equation} \label{eq:rec-k-refinement}
	\rtak(\y,\y^*)
		= \frac{|\{i\in\rel(\y^*):r_i<K\}|}{|\rel(\y^*)|},
\end{equation}
where $\rel(\y^*)$ denotes the set of relevant elements \eqref{eq:rel-def} and
$r_i$ stands for the number of irrelevant elements outrunning the $i$-th
element. Formally,
\begin{equation} \label{eq:ranks-for-queries}
	r_i = \rank_\alpha(\y)_i - \rank_\alpha(\y^+)_i
		\quad\text{for $i\in\rel(\y^*)$},
\end{equation}
in which $\rank_\alpha(\y^+)_i$ denotes the rank of the $i$-th element only
within the relevant ones.  Note that $\rtak$ depends on all the relevant
elements as intended.  We then define the loss at~$K$ as
\begin{equation} \label{eq:lak-def}
	\lossk(\y,\y^*)
		= 1 - \rtak(\y,\y^*).
\end{equation}
Next, we choose a weighting $w_K\ge 0$ of these losses
\begin{equation} \label{eq:lrec-single}
	\lrec(\y,\y^*)
		= \sum_{K=1}^\infty w_K \lossk(\y,\y^*),
\end{equation}
over values of~$K$.

Proposition~\ref{prop:recall-loss} (see the Supplementary material) computes a
closed form of \eqref{eq:lrec-single} for a given sequence of weights $w_K$.
Here, we exhibit closed-form solutions for two natural decreasing sequences of
weights:
\begin{equation} \label{eq:lrec-log}
	\lrec(\y,\y^*) =
		\begin{cases}
			\displaystyle
			\E_{i\in\rel(\y^*)} \ell(r_i)
				& \text{if $w_K\approx\frac1K$}
				\\
			\displaystyle
			\E_{i\in\rel(\y^*)} \ell\bigl(\ell(r_i)\bigr)
				& \text{if $w_K\approx\frac1{K\log K}$},
		\end{cases}
\end{equation}
where $\ell(k)=\log(1+k)$.

This also gives a theoretical explanation why some previous works
\citep{chen2019towards,henderson2016end} found it ``beneficial'' to optimize
the logarithm of a ranking metric, rather than the metric itself.  In our case,
the $\log$ arises from the most natural weight decay~$1/K$.

\subsection{Average Precision loss design}
\label{subsec:ap_loss}

\noindent
Having differentiable ranking, the generic $\ap$ does not require any further
modifications. Indeed, for any relevant element index $i\in\rel(\y^*)$, its
precision obeys
\begin{equation} \label{eq:prec-rank}
	\precision(i) = \frac{\rank_\alpha(\y^+)_i}{\rank_\alpha(\y)_i}
\end{equation}
where $\rank(\y^+)_i$ is the rank of the $i$-th element within all the relevant
ones.  The $\ap$ loss then reads
\begin{equation} \label{eq:lap}
	\lap(\y,\y^*)
		= 1 - \E_{i\in\rel(\y^*)} \precision(i).
\end{equation}
For calculating the mean Average Precision loss $\lmap$, we simply take the
mean over the classes $\CC$.

To alleviate the sparsity of supervision caused by rare positive examples
\ref{prob:sparse}, we also consider the $\ap$ loss across all the classes.
More specifically, we treat the matrices $\y(c)$ and $\y^*(c)$ as concatenated
vectors $\bar\y$ and $\bar\y^*$, respectively, and set
\begin{equation}
	\lapc = \lap(\bar\y,\bar\y^*).
\end{equation}
This practice is consistent with \cite{chen2019towards}.

\section{Experiments}

\noindent
We evaluate the performance of \method on object detection and several image
retrieval benchmarks.  The experiments demonstrate that our method for
differentiating through $\map$ and recall is generally on-par with the
state-of-the-art results and yields in some cases better performance.  We will
release code upon publication. Throughout the experimental section, the numbers
we report for \method are averaged over three restarts.

\subsection{Image retrieval}

\noindent
To evaluate the proposed Recall Loss \eqnp{eq:lrec-log} derived from \method we
run experiments for image retrieval on the CUB-200-2011 \cite{cub200}, Stanford
Online Products \cite{stanford-products}, and In-shop Clothes \cite{inshop}
benchmarks. We compare against a variety of methods from recent years, multiple
of which achieve state-of-the-art performance. The best-performing methods are
ABE-8 \cite{kim2018attention}, FastAP \cite{cakir2019deep}, and Proxy NCA
\cite{movshovitz-attias2017iccv}.

\begin{figure}
  \centering
  \includegraphics[width=0.95\linewidth]{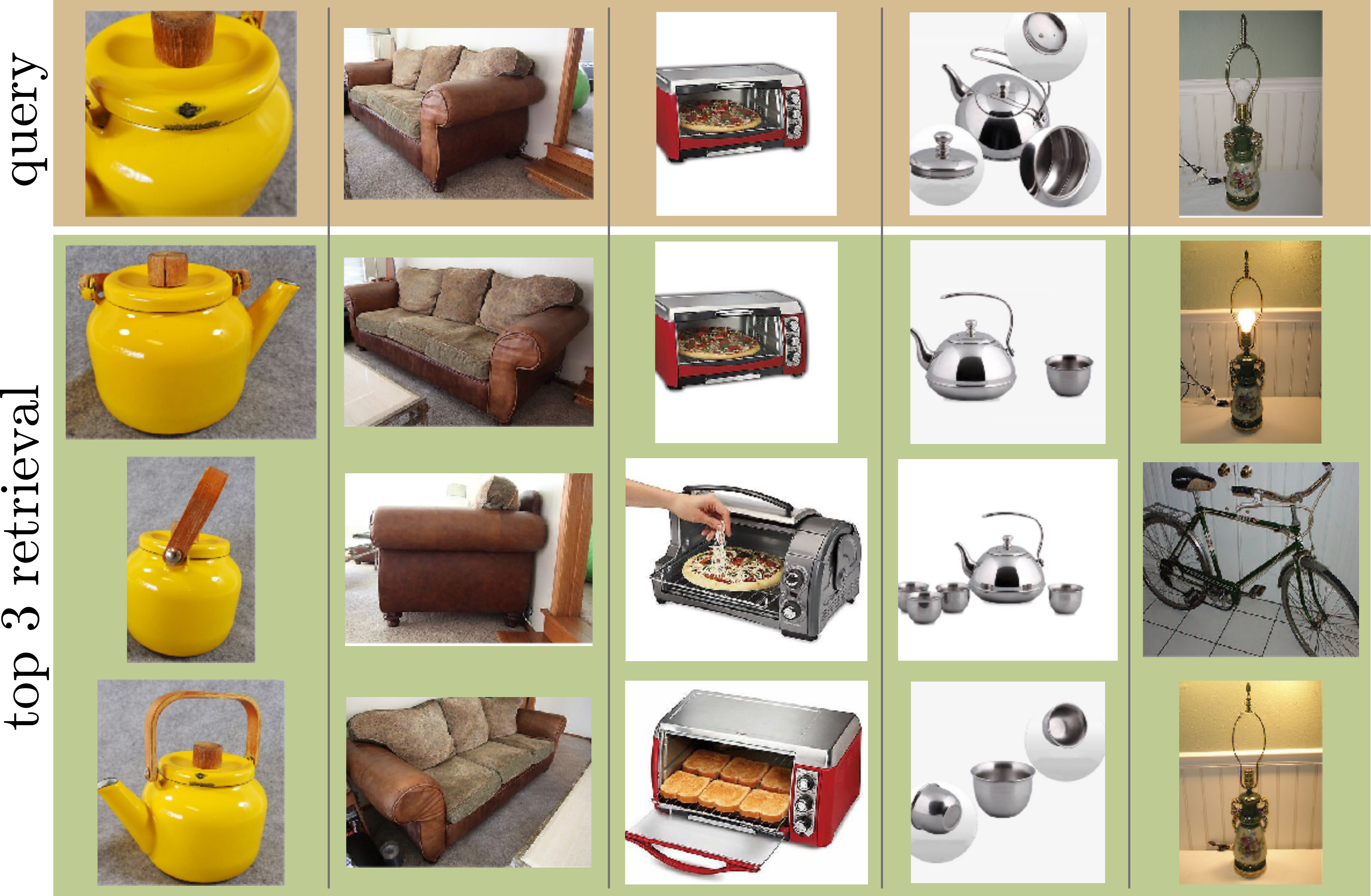}
  \caption{\emph{Stanford Online Products} image retrieval examples.}
  \label{fig:online-prod}
\end{figure}

\paragraph{Architecture}
For all experiments, we follow the most standard setup. We use a pretrained
ResNet50 \cite{he2016deep} in which we replace the final softmax layer with a
fully connected embedding layer which produces a $512$-dimensional vector for
each batch element. We normalize each vector so that it represents a point on
the unit sphere.  The cosine similarities of all the distinct pairs of elements
in the batch are then computed and the ground truth similarities are set to 1
for those elements belonging to the same class and 0 otherwise. The obvious
similarity of each element with itself is disregarded.  We compute the $\lrec$
loss for each batch element with respect to all other batch elements using the
similarities and average it to compute the final loss.  Note that our method
does not employ any sampling strategy for \textit{mining} suitable
pairs/triplets from those present in a batch. It does, however, share a
\textit{batch preparation} strategy with \cite{cakir2019deep} on two of the
datasets.

\paragraph{Parameters}
We use Adam optimizer \cite{kingma2014adam} with an amplified learning rate for
the embedding layer. We consistently set the batch size to 128 so that each
experiment runs on a GPU with 16GB memory.  Full details regarding training
schedules and exact values of hyperparameters for the different datasets are in
the Supplementary material.

\paragraph{Datasets}
For data preparation, we resize images to $256\times 256$ and randomly crop and
flip them to $224\times 224$ during training, using a single center crop on
evaluation.

We use the \emph{Stanford Online Products} dataset consisting of $120,053$
images with $22,634$ classes crawled from Ebay. The classes are grouped into 12
superclasses (\eg cup, bicycle) which are used for mini-batch preparation
following the procedure proposed in \cite{cakir2019deep}. We follow the
evaluation protocol proposed in \cite{stanford-products}, using $59,551$ images
corresponding to $11,318$ classes for training and $60,502$ images
corresponding to $11,316$ classes for testing.

The \emph{In-shop Clothes} dataset consists of $54,642$ images with $11,735$
classes. The classes are grouped into 23 superclasses (\eg MEN/Denim,
WOMEN/Dresses), which we use for mini-batch preparation as before.  We follow
previous work by using $25,882$ images corresponding to $3,997$ classes for
training and $14,218$ + $12,612$ images corresponding to $3,985$ classes each
for testing (split into a query + gallery set respectively). Given an image
from the query set, we retrieve corresponding images from the gallery set.

The \emph{CUB-200-2011} dataset consists of $11,788$ images of $200$ bird
categories.  Again we follow the evaluation protocol proposed in
\cite{stanford-products}, using the first $100$ classes consisting of $5,864$
images for training and the remaining $100$ classes with $5,924$ images for
testing.

\begin{table}
    \renewcommand{\arraystretch}{1.15}
    \centering
    \begin{tabular*}{\linewidth}{@{\ \extracolsep\fill}l|cccc@{\ }}
        \hline\Tstrut
        $\Rak$                                         & 1             & 10            & 100           & 1000          \\
        \hline\Tstrut
        Contrastive$^{512}_G$ \cite{oh2016deep}                  & 42.0          & 58.2          & 73.8          & 89.1          \\
        Triplet$^{512}_G$ \cite{oh2016deep}                      & 42.1          & 63.5          & 82.5          & 94.8          \\
        LiftedStruct$^{512}_G$ \cite{oh2016deep}                 & 62.1          & 79.8          & 91.3          & 97.4          \\
        Binomial Deviance$^{512}_G$ \cite{ustinova2016histogram} & 65.5          & 82.3          & 92.3          & 97.6          \\
        Histogram Loss$^{512}_G$ \cite{ustinova2016histogram}    & 63.9          & 81.7          & 92.2          & 97.7          \\
        N-Pair-Loss$^{512}_G$ \cite{sohn2016improved}            & 67.7          & 83.8          & 93.0          & 97.8          \\
        Clustering$^{64}_G$ \cite{song2017cvpr}                 & 67.0          & 83.7          & 93.2          & -             \\
        HDC$^{384}_G$ \cite{yuan2016hard}                        & 69.5          & 84.4          & 92.8          & 97.7          \\
        Angular Loss$^{512}_G$ \cite{wang2017iccv}               & 70.9          & 85.0          & 93.5          & 98.0          \\
        Margin$^{128}_{R50}$ \cite{wu2017iccv}                       & 72.7          & 86.2          & 93.8          & 98.0          \\
        Proxy NCA$^{64}_G$ \cite{movshovitz-attias2017iccv}     & 73.7          & -             & -             & -             \\
        A-BIER$^{512}_G$ \cite{opitz2018deep}                    & 74.2          & 86.9          & 94.0          & 97.8          \\
        HTL$^{128}_G$ \cite{ge2018deep}                          & 74.8          & 88.3          & 94.8          & 98.4          \\
        ABE-8$^{512}_G$ \cite{kim2018attention}                  & 76.3          & 88.4          & 94.8          & 98.2          \\
        FastAP$^{512}_{R50}$ \cite{cakir2019deep}                    & 76.4          & 89.1          & 95.4          & 98.5          \\
        \hline\Tstrut
        \textbf{\method}$^{512}_{R50}\log$         & 77.8 & 90.1 & 95.9 & \textbf{98.7} \\
        \textbf{\method}$^{512}_{R50}\log\log$         & \textbf{78.6} & \textbf{90.5} & \textbf{96.0} & \textbf{98.7} \\
        \hline
    \end{tabular*}
		\caption{Comparison with the state-of-the-art on the Stanford Online
			Products~\cite{oh2016deep}. On this dataset, with the highest number of
			classes in the test set, \method gives better performance than other
			state-of-the-art methods.}
    \label{tbl:sota-products}
\end{table}

\begin{table}
    \renewcommand{\arraystretch}{1.15}
    \centering
    \begin{tabular*}{\linewidth}{@{\extracolsep\fill}l|cccc@{\ }}
        \hline\Tstrut
        $\Rak$                                            & 1     & 2     & 4    & 8
        \\
        \hline\Tstrut
        Contrastive$^{512}_G$ \cite{oh2016deep}                  & 26.4  & 37.7  & 49.8 & 62.3  \\
        Triplet$^{512}_G$ \cite{oh2016deep}                      & 36.1  & 48.6  & 59.3 & 70.0  \\
        LiftedStruct$^{512}_G$ \cite{oh2016deep}                 & 47.2  & 58.9  & 70.2 & 80.2  \\
        Binomial Deviance$^{512}_G$ \cite{ustinova2016histogram} & 52.8  & 64.4  & 74.7 & 83.9  \\
        Histogram Loss$^{512}_G$ \cite{ustinova2016histogram}    & 50.3  & 61.9  & 72.6 & 82.4  \\
        N-Pair-Loss$^{64}_G$ \cite{sohn2016improved}            & 51.0  & 63.3  & 74.3 & 83.2  \\
        Clustering$^{64}_G$ \cite{song2017cvpr}                 & 48.2  & 61.4  & 71.8 & 81.9  \\
        Proxy NCA$^{512}_G$ \cite{movshovitz-attias2017iccv}     & 49.2  & 61.9  & 67.9 & 72.4  \\
        Smart Mining$^{64}_G$ \cite{harwood2017iccv}            & 49.8  & 62.3  & 74.1 & 83.3  \\
        Margin$^{128}_G$ \cite{wu2017iccv}                       & 63.8  & 74.4  & 83.1 & 90.0  \\
        HDC$^{384}_G$ \cite{yuan2016hard}                        & 53.6  & 65.7  & 77.0 & 85.6  \\
        Angular Loss$^{512}_G$ \cite{wang2017iccv}               & 54.7  & 66.3  & 76.0 & 83.9  \\
        HTL$^{128}_G$ \cite{ge2018deep}                          & 57.1  & 68.8  & 78.7 & 86.5  \\
        A-BIER$^{512}_G$ \cite{opitz2018deep}                    & 57.5  & 68.7  & 78.3 & 86.2  \\
        ABE-8$^{512}_G$ \cite{kim2018attention}                  & 60.6  & 71.5  & 80.5 & 87.7  \\
        Proxy NCA$^{512}_{R50}$ \cite{metriclearning-repo}  & \textbf{64.0}  & \textbf{75.4}  & \textbf{84.2} & 90.5  \\
        \hline\Tstrut
        \textbf{\method}$^{512}_{R50}\log$                      & 63.5 & 74.8 & 84.1 & 90.4 \\
        \textbf{\method}$^{512}_{R50}\log\log$                      & \textbf{64.0} & 75.3 & 84.1 & \textbf{90.6} \\
        \hline
    \end{tabular*}
		\caption{Comparison with the state-of-the-art on the
			CUB-200-2011~\cite{cub200} dataset. Our method \method{} is on-par with an
			(unofficial) ResNet50 implementation of Proxy NCA.}
    \label{tbl:sota-cub-200-2011-cars-196}
\end{table}

\begin{table}
    \centering
    \renewcommand{\arraystretch}{1.15}
              \setlength{\tabcolsep}{0pt}
    \begin{tabular*}{\linewidth}{@{\extracolsep\fill}l@{\ \ }|ccccc@{\ }}
    \hline\Tstrut
    $\Rak$                                                                 & 1              & 10             & 20             & 30             & 50             \\
    \hline\Tstrut
    FashionNet$_V$ \cite{liu2016deepfashion}                                   & 53.0           & 73.0           & 76.0           & 77.0           & 80.0           \\
    HDC$^{384}_G$ \cite{yuan2016hard}                                                & 62.1           & 84.9           & 89.0           & 91.2           & 93.1           \\
    DREML$^{48}_{R18}$ \cite{xuan2018deep}                                              & 78.4           & 93.7           & 95.8           & 96.7           & -              \\
    HTL$^{128}_G$ \cite{ge2018deep}                                                  & 80.9           & 94.3           & 95.8           & 97.2           & 97.8           \\
    A-BIER$^{512}_G$ \cite{opitz2018deep}                                            & 83.1           & 95.1           & 96.9           & 97.5           & 98.0           \\
    ABE-8$^{512}_G$ \cite{kim2018attention}                                          & 87.3           & 96.7           & 97.9           & 98.2           & 98.7  \\
    FastAP-Matlab$^{512}_{R50}$ \cite{cakir2019deep} & \textbf{90.9} & \textbf{97.7} & \textbf{98.5} & \textbf{98.8} & \textbf{99.1} \\
    FastAP-Python$^{512}_{R50}$ \cite{cakir2019fastap-repo}  \footnotemark{} \addtocounter{footnote}{-1}                               & 83.8?           & 95.5?           & 96.9?           & 97.5?           & 98.2?           \\
    \hline\Tstrut
    \textbf{\method}$^{512}_{R50}\log$                                                       & 88.1           & 97.0  & 97.9           & 98.4  & 98.8   \\    
        \textbf{\method}$^{512}_{R50}\log\log$                                                       & 86.3           & 96.2  & 97.4           & 97.9  & 98.5  
    \\
    \hline
    \end{tabular*}
		\caption{Comparison with the state-of-the-art methods on the In-shop
			Clothes~\cite{inshop} dataset. \method is on par with an ensemble-method
			ABE-8. Leading performance is achieved with a Matlab implementation of
			FastAP.}
    \label{tbl:sota-clothes}
\end{table}

\paragraph{Results}

For all retrieval results in the tables we add the embedding dimension as a
superscript and the backbone architecture as a subscript. The letters R, G, V
represent ResNet \cite{he2017deep}, GoogLeNet \cite{szegedy2015googlenet}, and
VGG-16 \cite{simonyan2015very}, respectively. We report results for both
\method$^{512}_{R50}\log$ and \method$^{512}_{R50}\log\log$, the main
difference being if the logarithm is applied once or twice to the rank in
Eq.~\eqref{eq:lrec-log}.

On Stanford Online Products we report $\Rak$ for $K \in \{1, 10, 100, 1000\}$
in \tab{tbl:sota-products}. The fact that the dataset contains the highest
number of classes seems to favor \method, as it outperforms all other methods.
Some example retrievals are presented in \fig{fig:online-prod}.

On CUB-200-2011 we report $\Rak$ for $K \in \{1, 2, 4, 8\}$ in
\tab{tbl:sota-cub-200-2011-cars-196}. For fairness, we include the performance
of Proxy NCA with a ResNet50 \cite{he2016deep} backbone even though the results
are only reported in an online implementation \cite{metriclearning-repo}. With
this implementation Proxy NCA and \method are the best-performing methods.

On In-shop Clothes we report $\Rak$ for value of $K \in \{1, 10, 20, 30, 50\}$
in \tab{tbl:sota-clothes}. The best-performing method is probably FastAP, even
though the situation regarding reproducibility is
puzzling\footnotemark{}.  \method matches the performance of
ABE-8~\cite{kim2018attention}, a complex ensemble method.

We followed the reporting strategy of \citep{kim2018attention} by evaluating on
the test set in regular training intervals and reporting performance at a
time-point that maximizes $\mathit{R@1}$.

\footnotetext{FastAP public code \citep{cakir2019fastap-repo} offers Matlab and
PyTorch implementations. Confusingly, the two implementations give very
different results. We contacted the authors but neither we nor they were able
to identify the source of this discrepancy in two seemingly identical
implementations. We report both numbers.}

\subsection{Object detection}
\label{subsec:object_detection}

\noindent
We follow a common protocol for testing new components by using Faster R-CNN
\cite{ren2015faster}, the most commonly used model in object detection, with
standard hyperparameters for all our experiment. We compare against baselines
from the highly optimized mmdetection toolbox \cite{mmdetection} and only
exchange the cross-entropy loss of the classifier with a weighted combination
of $\lmap$ and $\lapc$.

\paragraph{Datasets and evaluation} All experiments are performed on the widely
used Pascal VOC dataset \cite{everingham2010pascal}. We train our models on the
\pascal \vocvii and \vocxii \texttt{trainval} sets and test them on the \vocvii
\texttt{test} set.  Performance is measured in $\ap^{50}$ which is $\ap$
computed for bounding boxes with at least $50\%$ intersection-over-union
overlap with any of the ground truth bounding boxes.

\paragraph{Parameters}
The model was trained for 12 epochs on a single GPU with a batch-size of 8. The
initial learning rate 0.1 is reduced by a factor of 10 after 9 epochs.  For the
$\lap$ loss, we use $\tau=7$, $\alpha=0.15$, and
$\lambda=0.5$. The losses $\lmap$ and $\lapc$ are weighted in the $2:1$ ratio.

\paragraph{Results}
We evaluate Faster R-CNN trained on \vocvii and \vocboth with three different
backbones (ResNet50, ResNet101, and ResNeXt101 32x4d \cite{he2016deep,
xie2017aggregated}). Training with our $\ap$ loss gives a consistent
improvement (see \tab{table:object_detection_voc}) and pushes the standard
Faster R-CNN very close to state-of-the-art values ($\approx 84.1$) achieved by
significantly more complex architectures \cite{zhang2018single,
kim2018parallel}.

\begin{table}
	\centering
	\renewcommand{\arraystretch}{1.15}
	\begin{tabular*}{\linewidth}{@{\hskip1ex\extracolsep\fill}l|c@{\hskip1ex}c|c@{\hskip1ex}c@{\hskip1ex}}
		\hline\Tstrut
		Method & Backbone & Training  & CE & \method \\
		\hline\Tstrut
		Faster R-CNN & ResNet50 & \texttt{07} & 74.2 & \textbf{75.7}\\
		Faster R-CNN & ResNet50 & \texttt{07+12} & 80.4 & \textbf{81.4}\\
		Faster R-CNN  & ResNet101 & \texttt{07+12} & 82.4 & \textbf{82.9}\\
		Faster R-CNN & X101 32$\times$4d & \texttt{07+12} & 83.2 & \textbf{83.6}\\
		\hline
	\end{tabular*}
	\caption{Object detection performance on the \pascal \vocvii \texttt{test}
		set measured in $\ap^{50}$. Backbone X stands for ResNeXt and CE for cross
		entropy loss.}
	\label{table:object_detection_voc}
\end{table}

\subsection{Speed}

\noindent
Since \method can be implemented using sorting functions it is very fast to
compute (see \tab{table:object_detection_timing}) and can be used on very long
sequences. Computing $\ap$ loss for sequences with 320k elements as in the
object detection experiments takes less than 5\,ms for the forward/backward
pass.  This is $<0.5\%$ of the overall computation time on a batch.

\begin{table}
  \centering
  \renewcommand{\arraystretch}{1.15}
  \begin{tabular*}{\linewidth}{@{\hskip1ex\extracolsep\fill}l|cccc}
    \hline\Tstrut
    Length & 100k & 1M & 10M  & 100M \\
    \hline\Tstrut
    CPU & 33\,ms & 331\,ms & 3.86\,s  & 36.4\,s   \\
    GPU & 1.3\,ms & 7\,ms  & 61\,ms & 0.62\,s  \\
    \hline
  \end{tabular*}
	\caption{Processing time of Average Precision (using plain \textsc{Pytorch}
		implementation) depending on sequence length for forward/backward computation
		on a single Tesla V100 GPU and 1 Xeon Gold CPU core at 2.2GHz.}
  \label{table:object_detection_timing}
\end{table}

\begin{table}
	\renewcommand{\arraystretch}{1.15}
	\centering
	\begin{tabular*}{\linewidth}{@{\ \extracolsep\fill}l|cccc@{\ }}
		\hline\Tstrut
		$\mathit{R@1}$ & CUB200 & In-shop & Online Prod. \\
		\hline\Tstrut
		Full \method & 64.0 & 88.1 & 78.6 \\
		No batch memory  & 62.5 & 87.0 & 72.4 \\
		No margin & 63.2 & x & x\\
		\hline
	\end{tabular*}
	\caption{Ablation experiments for margin(\sec{sec:margin}) and batch
		memory (\sec{sec:memory}) in retrieval on the CUB200,  In-shop and
		Stanford Online Products datasets.}
	\label{table:ablation-studies}
\end{table}

\subsection{Ablation studies}

\noindent
We verify the validity of our loss design in multiple ablation studies.
\Tab{table:ablation-studies} shows the relevance of margin and batch memory for
the retrieval task. In fact, some of the runs without a margin diverged.
The importance of margin is also shown for the $\map$ loss in
\tab{table:object_detection_ablations}. Moreover, we can see that the
hyperparameter $\lambda$ of the scheme \cite{blackbox-diff} does not need
precise tuning. Values of $\lambda$ that are within a factor 5 of the selected
$\lambda=0.5$ still outperform the baseline.

\begin{table}
	\centering
	\renewcommand{\arraystretch}{1.15}
	\begin{tabular*}{\linewidth}{@{\hskip1ex\extracolsep\fill}l|ccc|c@{\hskip1ex}}
		\hline\Tstrut
		Method  & \method & $\lambda$ & margin& $\ap^{50}$ \\
		\hline\Tstrut
		Faster R-CNN & & & & 74.2\\
		Faster R-CNN & \checkmark & 0.5 & & 74.6\\
		Faster R-CNN & \checkmark & 0.1 & \checkmark & 75.2\\
		Faster R-CNN & \checkmark & 0.5 & \checkmark & \textbf{75.7}\\
		Faster R-CNN & \checkmark & 2.5 & \checkmark & 74.3\\
		\hline
	\end{tabular*}
	\caption{Ablation for \method on the object detection task.}
	\label{table:object_detection_ablations}
\end{table}

\section{Discussion}

\noindent
The proposed method \method{} is singled out by its conceptual purity in
directly optimizing for the desired metric while being simple, flexible, and
computationally efficient.  Driven only by basic loss-design principles and
without serious engineering efforts, it can compete with state-of-the-art
methods on image retrieval and consistently improve near-state-of-the-art
object detectors.  Exciting opportunities for future work lie in utilizing the
ability to efficiently optimize ranking-metrics of sequences with millions of
elements.

\newpage

{\small
\bibliographystyle{abbrvnat}

\flushcolsend
}

\newpage
\appendix

\section{Parameters of retrieval experiments}

\noindent
In all experiments we used the ADAM optimizer with a weight decay value of
$4\times10^{-4}$ and batch size 128.  All experiments ran at most 80 epochs
with a learning rate drop by $70\%$ after 35 epochs and a batch memory of
length 3.  We used higher learning rates for the embedding layer as specified
by defaults in \citet{cakir2019fastap-repo}.

We used a super-label batch preparation strategy in which we sample a
consecutive batches for the same super-label pair, as specified by
\citet{cakir2019fastap-repo}.  For the In-shop Clothes dataset we used 4
batches per pair of super-labels and 8 samples per class within a batch.  In
the Online Products dataset we used 10 batches per pair of super-labels along
with 4 samples per class within a batch.  For CUB200, there are no super-labels
and we just sample 4 examples per classes within a batch. These values again
follow \citet{cakir2019fastap-repo}.  The remaining settings are in
\Tab{tbl:recall-parameters}.

\begin{table}[h]
	\centering
	\renewcommand{\arraystretch}{1.15}
	\setlength{\tabcolsep}{0pt}
	\begin{tabular*}{\linewidth}{@{\extracolsep\fill}l@{\ \ }|ccc@{\ }}
		& Online Products & In-shop & CUB200 \\
		\hline\Tstrut
		$lr$ & $3\times 10^{-6}$ &  $10^{-5}$ & $5 \times 10^{-6}$\\
		margin & 0.02 & 0.05 & 0.02\\
		$\lambda$ & 4 & 0.2 & 0.2\\
		\hline
	\end{tabular*}
	\caption{Hyperparameter values for retrieval experiments.}
	\label{tbl:recall-parameters}
\end{table}

\section{Proofs}

\begin{lemma} \label{lem:coarea}
Let $\{w_k\}$ be a sequence of nonnegative weights and let $r_1,\dots,r_n$ be
positive integers.  Then
\begin{equation} \label{eq:coarea}
	\sum_{k=1}^\infty w_k |\{i:r_i\ge k\}|
		= \sum_{i=1}^n W(r_i),
\end{equation}
where
\begin{equation} \label{eq:W}
	W(k) = \sum_{i=1}^k w_i
		\quad\text{for $k\in\N$}.
\end{equation}
\end{lemma}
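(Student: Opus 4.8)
The plan is to prove the identity by a standard double-counting (Fubini/Tonelli) argument, rewriting both sides as a single sum over pairs $(i,k)$.

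First I would observe that the series on the left-hand side of \eqref{eq:coarea} is in fact a \emph{finite} sum: since each $r_i$ is a finite positive integer, the set $\{i:r_i\ge k\}$ is empty as soon as $k>\max_i r_i$, so all but finitely many terms vanish. This removes any convergence concern and legitimizes freely rearranging the order of summation in the steps below. (Alternatively, nonnegativity of the $w_k$ together with Tonelli's theorem for double series would justify the interchange even without invoking this finiteness.)

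Next I would expand the cardinality as a sum of indicators, $|\{i:r_i\ge k\}| = \sum_{i=1}^n \mathbf{1}[r_i\ge k]$, so that the left-hand side becomes the double sum $\sum_{k\ge 1}\sum_{i=1}^n w_k\,\mathbf{1}[r_i\ge k]$. Swapping the two summations gives $\sum_{i=1}^n \sum_{k\ge 1} w_k\,\mathbf{1}[r_i\ge k]$. For each fixed $i$ the indicator $\mathbf{1}[r_i\ge k]$ equals $1$ precisely for $k=1,\dots,r_i$ and $0$ otherwise, so the inner sum collapses to $\sum_{k=1}^{r_i} w_k$, which is exactly $W(r_i)$ by the defining relation \eqref{eq:W}. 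Summing over $i$ then yields the right-hand side $\sum_{i=1}^n W(r_i)$, completing the argument.

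There is essentially no hard step here: the entire content is the interchange of the order of summation, and the only point that genuinely requires a word of justification is that this interchange is valid — which follows immediately from the finiteness observation above (or from nonnegativity of the weights). Everything else is bookkeeping, so I would expect the write-up to be only a few lines long.
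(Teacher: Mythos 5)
Your proof is correct, but it takes a genuinely different route from the paper's. You argue in the direction left-to-right: expand the cardinality as $\sum_{i=1}^n \mathbf{1}[r_i\ge k]$, interchange the two sums (justified by finiteness, since the left-hand series has only finitely many nonzero terms once $k>\max_i r_i$, or by nonnegativity via Tonelli), and collapse the inner sum to $W(r_i)$ --- the classical ``layer-cake'' computation. The paper instead works right-to-left by summation by parts: it starts from $\sum_i W(r_i)$, rewrites it as $\sum_{k\ge 1} W(k)\,|\{i:r_i=k\}|$, expresses the level set $\{i:r_i=k\}$ as the difference of the superlevel sets $\{i:r_i\ge k\}$ and $\{i:r_i\ge k+1\}$, and telescopes using $w_k = W(k)-W(k-1)$ with $W(0)=0$. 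The two arguments are essentially dual to one another (Abel summation versus Fubini), and of comparable length; your version has the small advantage that the only step needing justification --- the interchange of summation --- is isolated and explicitly addressed, whereas the paper's telescoping silently relies on the same finiteness to shift the index in the second sum. Either write-up would be acceptable here.
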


Note that the sum on the left hand-side of \eqref{eq:coarea} is finite.

\begin{prop} \label{prop:recall-loss}
Let $w_K$ be nonnegative weights for $K\in\N$ and assume that $\lrec$ is given
by
\begin{equation} \label{eq:lrec-single-2}
	\lrec(\y,\y^*)
		= \sum_{K=1}^\infty w_K \lossk(\y,\y^*).
\end{equation}
Then
\begin{equation} \label{eq:lrec-closed}
	\lrec(\y,\y^*)
		= \frac{1}{|\rel(\y^*)|}
			\sum_{i\in\rel(\y^*)} W(r_i),
\end{equation}
where $W$ is as in \eqref{eq:W}.
\end{prop}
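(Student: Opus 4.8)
The plan is to reduce the claim directly to Lemma~\ref{lem:coarea} after a single algebraic rewriting of the per-$K$ loss. First I would unfold the definitions: combining \eqref{eq:lak-def} with \eqref{eq:rec-k-refinement} gives
\[
	\lossk(\y,\y^*) = 1 - \frac{|\{i\in\rel(\y^*):r_i<K\}|}{|\rel(\y^*)|}.
\]
The key step is to pass to the complement inside the fixed index set $\rel(\y^*)$: since $\{i\in\rel(\y^*):r_i<K\}$ and $\{i\in\rel(\y^*):r_i\ge K\}$ partition $\rel(\y^*)$, their cardinalities sum to $|\rel(\y^*)|$, and therefore
\[
	\lossk(\y,\y^*) = \frac{|\{i\in\rel(\y^*):r_i\ge K\}|}{|\rel(\y^*)|}.
\]

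Next I would substitute this into \eqref{eq:lrec-single-2} and factor the constant $1/|\rel(\y^*)|$ out of the sum over $K$, which yields
\[
	\lrec(\y,\y^*) = \frac{1}{|\rel(\y^*)|}\sum_{K=1}^\infty w_K\,|\{i\in\rel(\y^*):r_i\ge K\}|.
\]
The double sum that remains is exactly the left-hand side of \eqref{eq:coarea}, applied to the family $\{r_i\}_{i\in\rel(\y^*)}$ in the role of $r_1,\dots,r_n$. Invoking Lemma~\ref{lem:coarea} converts it into $\sum_{i\in\rel(\y^*)} W(r_i)$, which is precisely the right-hand side of \eqref{eq:lrec-closed}, completing the argument.

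There is essentially no genuine obstacle here; the entire content is the Fubini-style interchange of the two summations, and that is exactly what Lemma~\ref{lem:coarea} packages (together with the finiteness remark following it, valid because $r_i\le n$ forces all but finitely many summands to vanish). The one point deserving a sentence of care is that Lemma~\ref{lem:coarea} is stated for \emph{positive} integers $r_i$, whereas by \eqref{eq:ranks-for-queries} a relevant element sitting at the very top of the ranking has $r_i=0$. This is harmless: such an index contributes $W(0)=0$ (an empty sum) on the right and is never counted in $\{i:r_i\ge k\}$ for any $k\ge1$ on the left, so the telescoping identity holds verbatim for nonnegative integers and no modification of the lemma is needed.
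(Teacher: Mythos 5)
Your proof is correct and follows exactly the paper's route: rewrite $\lossk$ via the complement within $\rel(\y^*)$, pull out the factor $1/|\rel(\y^*)|$, and invoke Lemma~\ref{lem:coarea} to swap the order of summation. Your closing remark about $r_i=0$ (where Lemma~\ref{lem:coarea} is stated only for positive integers) is a genuine point of care that the paper's own proof silently glosses over, and your resolution via $W(0)=0$ is the right one.
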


\begin{proof}
Taking the complement of the set $\rel(\y^*)$ in the definition of $\lossk$, we
get
\begin{equation} \label{eq:lak}
	\lossk(\y,\y^*)
		= \frac{|\{i\in\rel(\y^*):r_i\ge K\}|}{|\rel(\y^*)|},
\end{equation}
whence \eqref{eq:lrec-single-2} reads as
\begin{equation*}
	\lrec(\y,\y^*)
		= \frac{1}{|\rel(\y^*)|}
			\sum_{k=1}^\infty w_K |\{i:r_i\ge K\}|.
\end{equation*}
Equation \eqref{eq:lrec-closed} then follows by Lemma~\ref{lem:coarea}.
\end{proof}

\begin{proof}[proof of Lemma~\ref{lem:coarea}]
Observe that $w_k = W(k)-W(k-1)$ and $W(0)=0$. Then
\begin{align*}
	\sum_{i=1}^n W(r_i)
		& = \sum_{k=1}^\infty W(k) |\{i:r_i=k\}|
			\\
		& = \sum_{k=1}^\infty W(k) \bigl|\{i:r_i\ge k\}\setminus\{i:r_i\ge k+1\}\bigr|
			\\
		& = \sum_{k=1}^\infty W(k) |\{i:r_i\ge k\}|
			\\
		& \quad - \sum_{k=1}^\infty W(k-1) |\{i:r_i\ge k\}|
			\\
		& = \sum_{k=1}^\infty \bigl(W(k) - W(k-1)\bigr) |\{i:r_i\ge k\}|
			\\
		& = \sum_{k=1}^\infty w_k |\{i:r_i\ge k\}|
\end{align*}
and \eqref{eq:coarea} follows.
\end{proof}

\begin{proof}[Proof of \eqref{eq:lrec-log}]
Let us set $w_k = \log(1+1/k)$ for $k\in\N$.  Then from Taylor's expansion of
$\log$ we have the desired $w_k \approx \frac1k$ and
\begin{align*}
	W(k)
		& = \sum_{i=1}^k \log\left( 1+\frac{1}{i} \right)
			\\
		& = \log\left(\prod_{i=1}^k \frac{1+i}{i} \right)
			= \log(1+k).
\end{align*}

If we set
\begin{equation*}
	w_k = \log\left( 1+ \frac{\log\left( 1+ \frac{1}{k} \right)}{1+\log k} \right),
		\quad\text{for $k\in\N$}
\end{equation*}
then, using Taylor's expansions again,
\begin{equation*}
	w_k
		\approx \frac{\log\left( 1+ \frac{1}{k} \right)}{1+\log k}
		\approx \frac{1}{k\log k}
\end{equation*}
and
\begin{align*}
	W(k)
		& = \sum_{i=1}^k \log\left( 1+ \frac{\log\left( 1+ \frac{1}{k} \right)}{1+\log k} \right)
			\\
		& = \log\left(\prod_{i=1}^k \frac{1+\log(1+i)}{1+\log i} \right)
			\\
		& = \log\bigl(1+\log(1+k)\bigr).
\end{align*}
The conclusion then follows by Proposition~\ref{prop:recall-loss}.
\end{proof}

\begin{figure*}[b]
  \centering
  \begin{subfigure}[b]{\linewidth}
  \centering
  \includegraphics[width=0.245\linewidth]{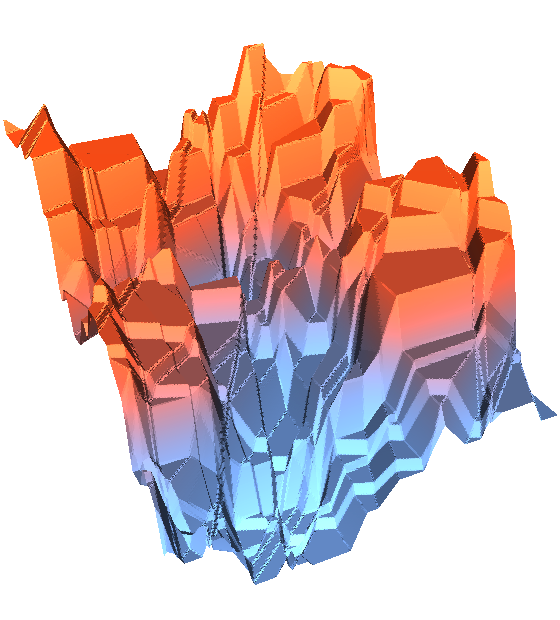}
  \includegraphics[width=0.245\linewidth]{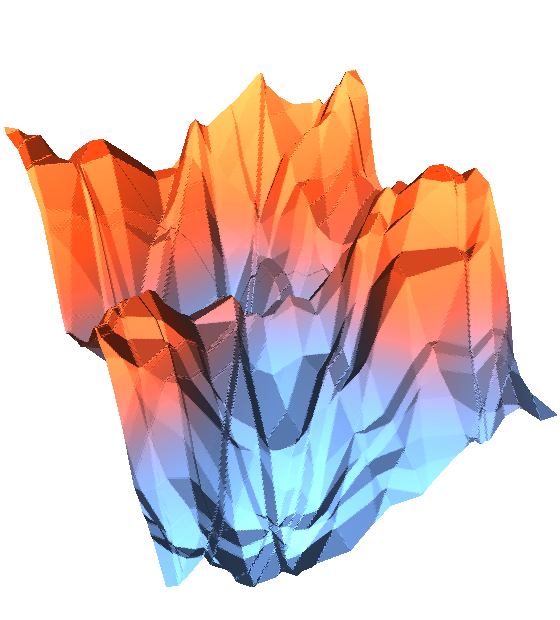}
  \includegraphics[width=0.245\linewidth]{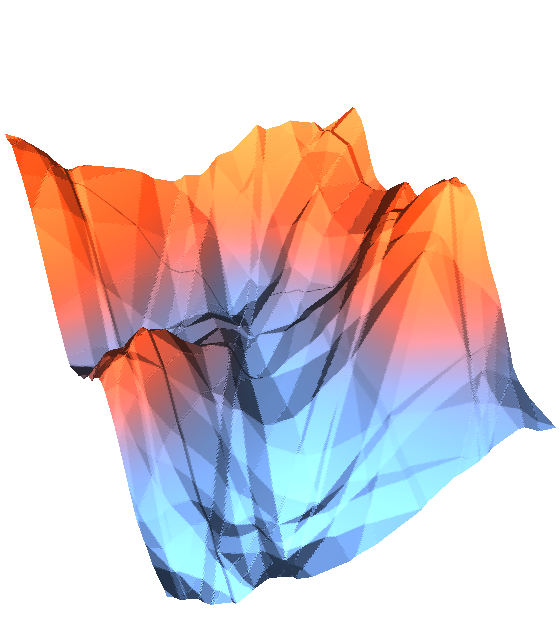}
  \includegraphics[width=0.245\linewidth]{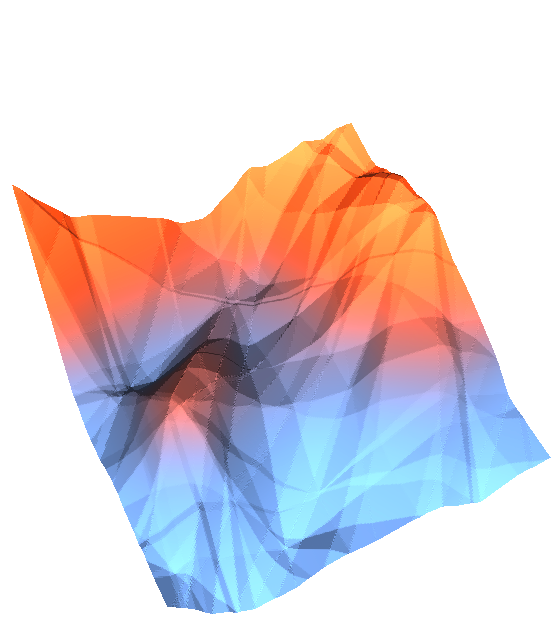}
  \caption{Ranking interpolation by \citep{blackbox-diff} for $\lambda=0.2, 0.5, 1.0, 2.0$.}
  \label{fig:f-lambda-all}
  \end{subfigure}
  \begin{subfigure}[b]{\linewidth}
  \centering
  \includegraphics[width=0.245\linewidth]{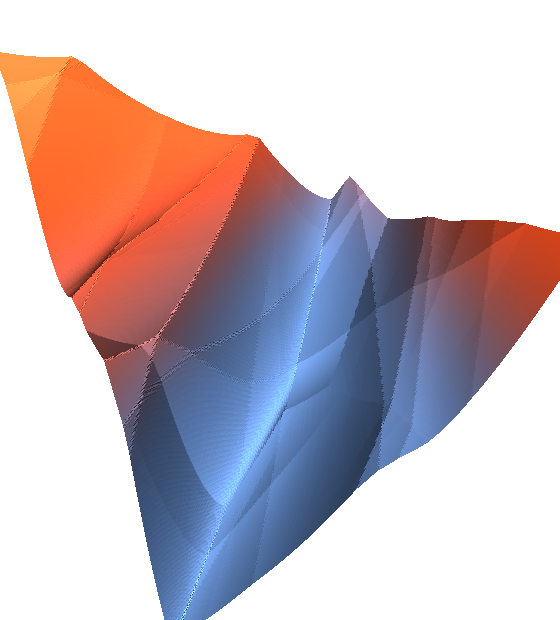}
  \includegraphics[width=0.245\linewidth]{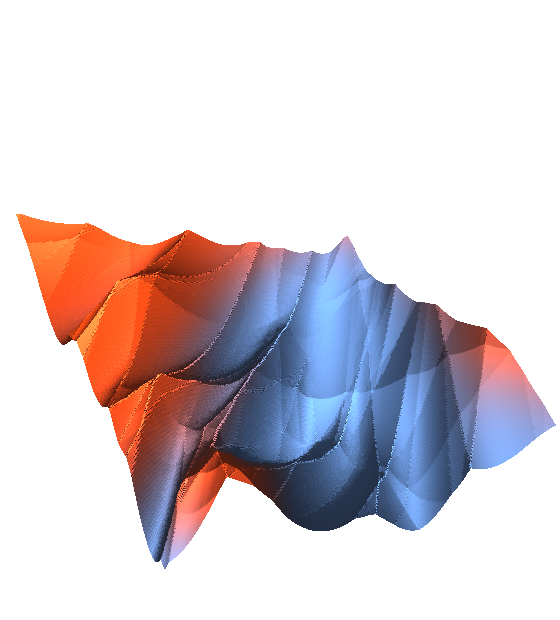}
  \includegraphics[width=0.245\linewidth]{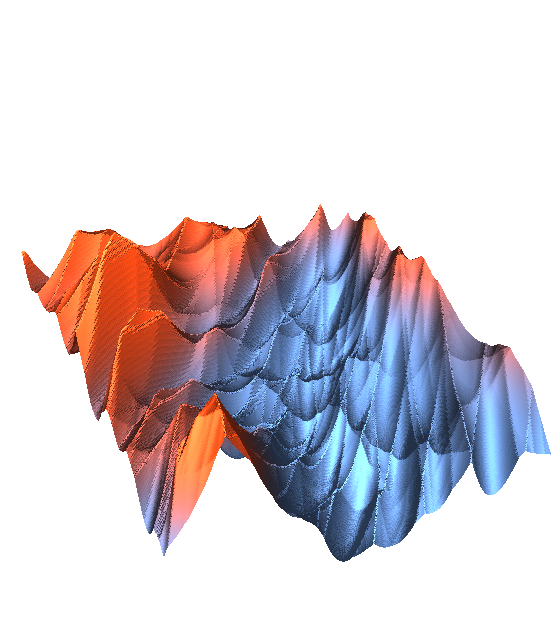}
  \includegraphics[width=0.245\linewidth]{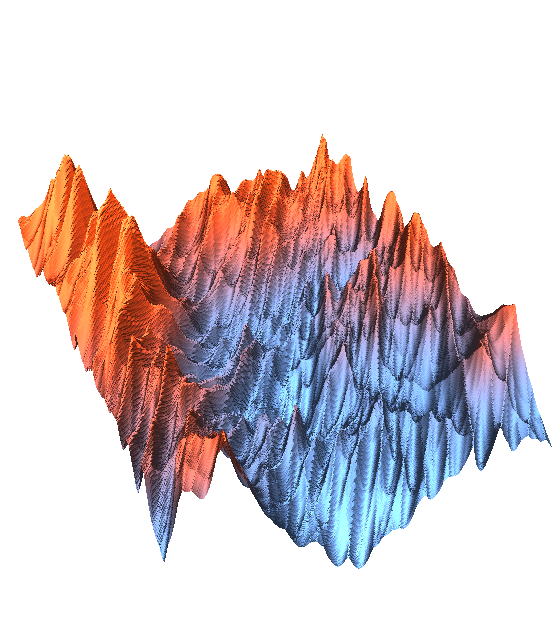}
  \caption{FastAp \citep{cakir2019deep} with bin counts $5,10,20,40$.}
  \label{fig:f-fast-all}
  \end{subfigure}
  \caption{Evolution of the ranking-surrogate landscapes with respect to their parameters.}
  \label{fig:}
	\vglue0.5in
\end{figure*}

\section{Ranking surrogates visualization}

\noindent
For the interested reader, we additionally present visualizations of smoothing
effects introduced by different approaches for direct optimization of
rank-based metrics. We display the behaviour of our approach using blackbox
differentiation \citep{blackbox-diff}, of FastAP  \citep{cakir2019deep}, and of
SoDeep \citep{engilberge2019sodeep}.

In the following, we fix a 20-dimensional score vector $w \in \R^{20}$ and a
loss function $L$ which is a (random but fixed) linear combination of the ranks
of $w$. We plot a (random but fixed) two-dimensional section of $\R^{20}$ of
the loss landscape $L(w)$.  In \fig{fig:f-orig} we see the true piecewise
constant function. In \fig{fig:f-lambda}, \fig{fig:f-deep} and \fig{fig:f-fast}
the ranking is replaced by interpolated ranking \citep{blackbox-diff}, FastAP
soft-binning ranking \citep{cakir2019deep} and by pretrained SoDeep LSTM
\citep{engilberge2019sodeep}, respectively. In \fig{fig:f-lambda-all} and
\fig{fig:f-fast-all} the evolution of the loss landscape with respect to
parameters is displayed for the blackbox ranking and FastAP.

\section*{Acknowledgement}

\noindent
We thank the International Max Planck Research School for Intelligent Systems
(IMPRS-IS) for supporting Marin Vlastelica and Claudio Michaelis. We
acknowledge the support from the German Federal Ministry of Education and
Research (BMBF) through the Tübingen AI Center (FKZ: 01IS18039B). Claudio
Michaelis was supported by the Deutsche Forschungsgemeinschaft (DFG, German
Research Foundation) via grant EC 479/1-1 and the Collaborative Research Center
(Projektnummer 276693517 -- SFB 1233: Robust Vision).

\begin{figure*}
	\centering
	\begin{subfigure}[b]{0.49\linewidth}
		\includegraphics[width=\textwidth]{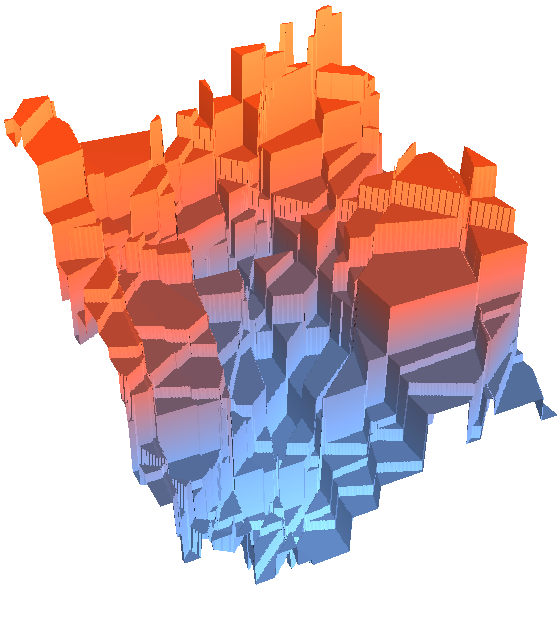}
		\caption{Original piecewise constant landscape}
		\label{fig:f-orig}
	\end{subfigure}
	\begin{subfigure}[b]{0.49\linewidth}
		\includegraphics[width=\textwidth]{f_lambda_05}
		\caption{Piecewise linear interpolation scheme of \citep{blackbox-diff}
			with $\lambda=0.5$}
		\label{fig:f-lambda}
	\end{subfigure}
	\\
	\begin{subfigure}[b]{0.49\linewidth}
		\includegraphics[width=\textwidth]{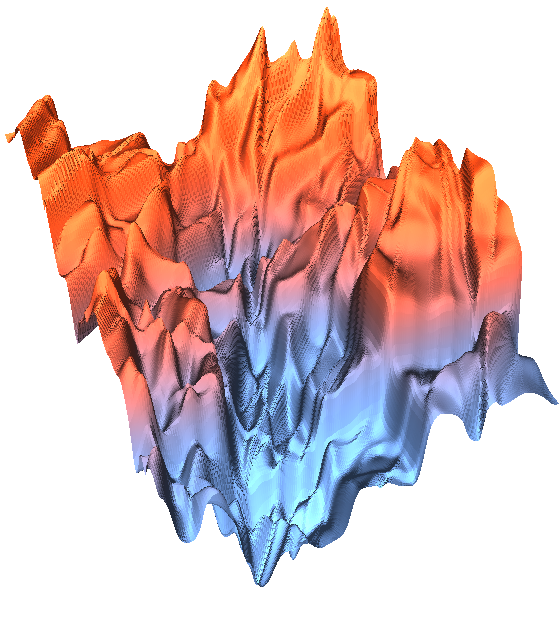}
		\caption{SoDeep LSTM-based ranking surrogate \citep{engilberge2019sodeep}}
		\label{fig:f-deep}
	\end{subfigure}
	\begin{subfigure}[b]{0.49\linewidth}
		\includegraphics[width=\textwidth]{f_fast_10}
		\caption{FastAP \citep{cakir2019deep} soft-binning with 10 bins.}
		\label{fig:f-fast}
	\end{subfigure}
	\caption{Visual comparison of various differentiable proxies for piecewise
		constant function.}
	\label{fig:f-ranks}
\end{figure*}

\end{document}